\theoremstyle{plain}
\newtheorem{lemma}{Lemma}
\newtheorem{theorem}{Theorem}
\author[1]{Max Hartman}
\author[1,2]{Lav R.~Varshney}
\affil[1]{University of Illinois, Electrical \& Computer Engineering Department}
\affil[2]{University of Illinois, Coordinated Science Laboratory}
\title{SparseJEPA: Sparse Representation Learning of Joint Embedding Predictive Architectures}
\date{}
\begin{document}
\maketitle

\begin{abstract}
Joint Embedding Predictive Architectures (JEPA) have emerged as a powerful framework for learning general-purpose representations. However, these models often lack interpretability and suffer from inefficiencies due to dense embedding representations. We propose SparseJEPA, an extension that integrates sparse representation learning into the JEPA framework to enhance the quality of learned representations. SparseJEPA employs a penalty method that encourages latent space variables to be shared among data features with strong semantic relationships, while maintaining predictive performance. We demonstrate the effectiveness of SparseJEPA by training on the CIFAR-100 dataset and pre-training a lightweight Vision Transformer. The improved embeddings are utilized in linear-probe transfer learning for both image classification and low-level tasks, showcasing the architecture's versatility across different transfer tasks. Furthermore, we provide a theoretical proof that demonstrates that the grouping mechanism enhances representation quality. This was done by displaying that grouping reduces Multiinformation among latent-variables, including proofing the Data Processing Inequality for Multiinformation. Our results indicate that incorporating sparsity not only refines the latent space but also facilitates the learning of more meaningful and interpretable representations. In further work, hope to further extend this method by finding new ways to leverage the grouping mechanism through object-centric representation learning.
\\\\
\textbf{Keywords:} Representation Learning, Joint Embedding Predictive Architecture
\end{abstract}

\section{Introduction}
Joint Embedding Predictive Architectures (JEPA) seek to learn comprehensive representations by predicting masked target blocks provided a single context block \citep{assran2023selfsupervisedlearningimagesjointembedding}. These architectures learn such representations by making predictions in the latent-space of the data they seek to reconstruct. In the case of images, for example, the context block and target blocks are embedded via a Vision Transformer. These models have seen great results in Computer Vision downstream tasks such as Image Classification, Object Counting, and Depth Perception, compared to similar Self-Supervised Architectures like Masked Autoencoders (MAE) \citep{he2021maskedautoencodersscalablevision}, data2vec \citep{baevski2022data2vecgeneralframeworkselfsupervised}, and Context Auto Encoders (CAE) \citep{chen2023contextautoencoderselfsupervisedrepresentation}  \par
Despite the success of these models to learn abstract representations, they rely on dense embeddings. This high density also poses a challenge from an interpretability standpoint. For future work on learning representations for World Knowledge, understanding how representations group and store knowledge will be critical for further downstream tasks across Robotics and Computer Vision. Despite their effectiveness, the dense embedding representations in JEPA act as "black boxes," making it difficult to understand the factors driving their predictions. Sparse representation learning provides an opportunity to make these embeddings more interpretable, yielding insights into the underlying structure of the data.\par
Sparse representation learning offers a transformative approach to addressing inefficiencies and interpretability challenges in JEPA. By introducing sparsity into the latent space, models can focus on the most relevant features, enabling them to learn meaningful representations that uncover intrinsic structures within the data.\par
Sparse representations go beyond mere computational benefits by naturally aligning the latent space with meaningful patterns in the data. In the case of SparseJEPA, our results demonstrate that sparsity helps the model learn a latent space that captures specific groupings of shared strong semantic relationships across input images. This grouping reflects the underlying structure of the data, allowing embeddings to emphasize regions or features that share semantic or informational similarities. For example, in visual datasets, sparsity can lead to embeddings that cluster objects or textures based on shared attributes, revealing deeper insights into the data distribution. This alignment of sparse embeddings with shared semantic relationships enhances both the interpretability and utility of the model\par
In the broader field of machine learning, techniques such as $L_1$-regularization, structured sparsity, and group sparsity have been instrumental in promoting sparse solutions. Output-Interpretable Variational Autoencoders have been seen to enhance latent-space representations, while maintaining model performance \citep{pmlr-v80-ainsworth18a}. $L_1$-regularization penalizes the magnitude of latent variables, effectively shrinking less relevant features to zero, while structured sparsity encourages hierarchical or grouped patterns in embeddings. These methods have demonstrated success in producing compact and interpretable representations. However, their integration into JEPA frameworks remains relatively unexplored, representing a key opportunity to leverage these advancements in a joint embedding context.\par
This work proposes an architecture extending JEPA by including a sparsity loss penalty to enhance interpretability and efficiency in the learned latent representations. The sparsity loss penalty employed is adapted from oi-VAE: Output Interpretable VAEs for Nonlinear Group Factor Analysis. This loss encourages the latent space to reflect groupings of shared semantic relationships by penalizing redundant or uninformative latent variables, promoting a structured and interpretable embedding space. By integrating this loss into the JEPA framework, the proposed architecture not only improves computational efficiency but also aligns the latent space with meaningful data patterns, enabling better insights and downstream utility.

\label{sec:introduction}

\section{Background}
\subsection{Joint Embedding Predictive Architecture (JEPA)}
JEPA is a self-supervised learning framework designed to learn data representations. JEPA learns a shared embedding space by predicting the embedding of target patches (masked regions of an image) using the embeddings of context patches (visible regions), encouraging the model to infer relationships and semantic structures within the input data. This distinguishes JEPA from traditional reconstruction-based methods, as it directly learns from the latent space rather than raw pixel-level details \citep{assran2023selfsupervisedlearningimagesjointembedding}.
\subsection{Output Interpretable Variational Autoencoders (oi-VAE)}
oi-VAE \citep{pmlr-v80-ainsworth18a} is a framework designed to achieve interpretable and disentangled latent representations for nonlinear group factor analysis. It extends traditional variational autoencoders (VAEs) by grouping associating latent variables with specific components of the input data. This is achieved by decomposing the data into distinct outputs, each corresponding to a separate group of latent variables. The oi-VAE assumes that the observed data can be factorized into components, where each component is generated from a distinct latent variable. These latent variables are disentangled into group-specific and shared (global) factors, enabling the model to capture both unique and common structures in the data. By linking latent variables directly to data components, the oi-VAE achieves interpretability, allowing each latent dimension to correspond to a specific aspect of the data.
\subsection{Sparse Autoencoders}
Sparse Autoencoders are a type of neural network designed to learn efficient and meaningful representations of data by introducing a sparsity constraint in the hidden layer. This constraint encourages the network to activate only a small number of neurons in response to an input, resulting in compact and interpretable representations. Sparse Autoencoders aim to learn a representation where only a small subset of hidden units is active for any given input. This sparsity is typically enforced through a regularization term in the loss function, such as the KL divergence between the average activation of the hidden units and a desired sparsity target, or via $L_1$-regularization on the hidden activations \citep{le2012buildinghighlevelfeaturesusing}.
\label{sec:background}
\section{Sparse grouping of latent variables reduces redundant multiinformation}
In this section, we build up to a theorem which states that the grouping mechanism of SparseJEPA and oi-VAE enhances learned representations by reducing multiinformation among the data.\\\\
We follow \citet{8023479} and \citet{Studený1998} by defining multiiformation as:
\[
I(X_1; \ldots; X_n) = D_{\mathrm{KL}}\bigl(p(x_1, \ldots, x_n) \,\big\|\, \prod_{i=1}^n p(x_i)\bigr).
\]

where we let $X_1, \ldots, X_n$ be random variables with joint distribution $p(x_1, \ldots, x_n)$ and $D_{KL}(\cdot\,\big\|\ \cdot)$ is KL divergence.\\

\begin{lemma}[Grouping Reduces Multiinformation]
Let $X_1, \ldots, X_n$ be random variables with joint distribution $p(x_1, \ldots, x_n)$. Suppose the variables are partitioned into $m$ nonempty subsets $S_1, \ldots, S_m$. For each $j = 1, \ldots, m$, define a latent variable $G_j$ that depends only on the subset $\{X_i : i \in S_j\}$; that is,
\[
G_j = f_j\bigl((X_i : i \in S_j)\bigr),
\]
where $f_j$ is some deterministic function. Then, the multiinformation among the latent variables satisfies:
\[
I(G_1; \ldots; G_m) \leq I(X_1; \ldots; X_n).
\]
Furthermore, if there exist nontrivial inter-group dependencies in the original variables $X_1, \ldots, X_n$, the inequality is strict.
\end{lemma}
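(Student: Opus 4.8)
The plan is to prove the inequality in two stages that mirror what the grouping mechanism actually does: it first collects the variables of each subset into a single block, and then it summarizes each block through the deterministic map $f_j$. Accordingly I would introduce the block tuples $Y_j = (X_i : i \in S_j)$ and factor the target inequality as $I(G_1;\ldots;G_m) \le I(Y_1;\ldots;Y_m) \le I(X_1;\ldots;X_n)$, proving the right-hand (grouping) inequality by an exact additive decomposition and the left-hand (data-processing) inequality by monotonicity of KL divergence under a coordinatewise map.

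For the grouping step I would work directly from the definition. Writing the reference measure as $\prod_{i=1}^n p(x_i) = \prod_{j=1}^m \prod_{i\in S_j} p(x_i)$ and inserting the block marginals $\prod_j p(y_j)$, the log-likelihood ratio splits additively:
\[
\log\frac{p(x_1,\ldots,x_n)}{\prod_i p(x_i)} = \log\frac{p(y_1,\ldots,y_m)}{\prod_j p(y_j)} + \sum_{j=1}^m \log\frac{p(y_j)}{\prod_{i\in S_j} p(x_i)}.
\]
Taking expectations under $p$ gives the decomposition $I(X_1;\ldots;X_n) = I(Y_1;\ldots;Y_m) + \sum_{j=1}^m W_j$, where each $W_j = D_{\mathrm{KL}}\bigl(p(y_j)\,\big\|\,\prod_{i\in S_j}p(x_i)\bigr)$ is a within-block multiinformation. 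Since every $W_j \ge 0$ by nonnegativity of KL divergence, $I(Y_1;\ldots;Y_m)\le I(X_1;\ldots;X_n)$ is immediate.

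The data-processing step is the crux of the reduction. I would consider the deterministic map $\Phi(y_1,\ldots,y_m) = (f_1(y_1),\ldots,f_m(y_m))$, apply it to the two measures appearing in $I(Y_1;\ldots;Y_m)$, and invoke monotonicity of KL under a Markov kernel (a deterministic map being a degenerate kernel). Pushing the joint $p(y)$ through $\Phi$ yields $p(g)$; the key point, and the reason the coordinatewise structure $G_j = f_j(Y_j)$ is essential, is that pushing the product $\prod_j p(y_j)$ through $\Phi$ yields exactly $\prod_j p(g_j)$, since a product map sends a product measure to the product of its pushforwards. The contraction property of KL then gives $I(G_1;\ldots;G_m) = D_{\mathrm{KL}}(p(g)\|\prod_j p(g_j)) \le D_{\mathrm{KL}}(p(y)\|\prod_j p(y_j)) = I(Y_1;\ldots;Y_m)$. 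I would either cite the general contraction of $f$-divergences under channels or supply the short Jensen/log-sum argument, taking care to phrase it for the possibly continuous densities the paper allows rather than only the discrete case. Chaining the two inequalities closes the main claim.

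The strictness clause is where I expect the main obstacle. From the decomposition, $I(X)-I(G) = \sum_j W_j + \bigl(I(Y)-I(G)\bigr)$ is a sum of two nonnegative terms, so equality forces both to vanish: every block must be internally independent \emph{and} the likelihood ratio $p(y)/\prod_j p(y_j)$ must be $\Phi$-measurable (the equality condition for the data-processing step). The robust sufficient condition for a strict drop is therefore that some block carries genuine within-group dependence ($W_j>0$) or that the maps $f_j$ discard cross-group information. I would flag that the mere existence of inter-group dependence is not by itself enough -- injective $f_j$ on singleton blocks give equality -- so to honor the stated hypothesis I would either strengthen it to say the grouping is lossy on the inter-group structure, or recast the strictness condition in terms of positive within-block multiinformation. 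Pinning down this exact equality case, rather than the inequality itself, is the delicate part of the argument.
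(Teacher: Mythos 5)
Your proof is correct, and it is genuinely more careful than the paper's, which disposes of the whole lemma in one sentence: it declares the map $(X_1,\ldots,X_n)\mapsto(G_1,\ldots,G_m)$ deterministic and invokes ``the data processing inequality for KL divergence.'' That one-step argument has a gap that your two-stage factorization through $Y_j=(X_i : i\in S_j)$ is precisely designed to avoid: if one pushes both arguments of $D_{\mathrm{KL}}\bigl(p(x)\,\big\|\,\prod_i p(x_i)\bigr)$ through the full map $\Phi$, the first pushforward is indeed $p(g)$, but the second is $\prod_j (f_j)_{*}\bigl(\prod_{i\in S_j}p(x_i)\bigr)$ --- the law of each $f_j$ evaluated at \emph{independent} inputs --- which is not $\prod_j p(g_j)$ unless every block is internally independent. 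So the quantity that DPI bounds is not $I(G_1;\ldots;G_m)$; to close the argument one needs either your exact decomposition $I(X)=I(Y)+\sum_j W_j$ followed by DPI applied to the coordinatewise map (where the pushforward of $\prod_j p(y_j)$ genuinely is $\prod_j p(g_j)$), or an auxiliary lemma that the product of true marginals is the KL-closest product measure. Your route also buys something the paper's does not: the exact identity quantifies the drop in multiinformation as the sum of within-block multiinformations plus the data-processing deficit. Finally, you are right to distrust the strictness clause: as stated in the lemma it is false --- partitioning into singletons with each $f_j$ the identity gives $I(G_1;\ldots;G_m)=I(X_1;\ldots;X_n)$ no matter how strong the inter-group dependence --- and the paper's proof merely restates the claim (``strict whenever the grouping process discards inter-group dependencies'') rather than proving it. Your proposed repair, requiring positive within-block multiinformation or lossiness of the $f_j$ on the relevant structure, is the correct fix.
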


\begin{proof}
The mapping from $(X_1, \ldots, X_n)$ to $(G_1, \ldots, G_m)$ is deterministic because each $G_j$ depends only on a subset $S_j$ of the variables. By the data processing inequality for KL divergence, we have:
\[
I(G_1; \ldots; G_m) \leq I(X_1; \ldots; X_n).
\]
Moreover, the inequality is strict whenever the grouping process discards inter-group dependencies.
\end{proof}
\begin{theorem}
(Benefits of Sparse Grouping) Let $Z = (Z_1, \ldots, Z_k)$ be latent variables and $X = (X_1, \ldots, X_n)$ be observed variables. Suppose that $Z$ induces a structured dependency in $X$: each $Z_i$ predominantly influences a particular subset of $X$, and together $(Z_1, \ldots, Z_k)$ explain the joint distribution of $X$.

Now partition $\{Z_1, \ldots, Z_k\}$ into nonempty, disjoint subsets $S_1, \ldots, S_m$, and define the grouped variables:
\[
G_j = (Z_i : i \in S_j), \quad j = 1, \ldots, m.
\]
If the partition reflects the latent structure, then:
\begin{enumerate}
    \item Grouping reduces multiinformation:
    \begin{equation}
    I(G_1; \cdots; G_m) < I(X_1; \cdots; X_n),
    \end{equation}
    provided there exist nontrivial inter-group dependencies in $X_1, \ldots, X_n$.

    \item Grouping increases mutual information with $Z$:
    \begin{equation}
    I(Z; G_1, \ldots, G_m) \geq I(Z; X_1, \ldots, X_n).
    \end{equation}
\end{enumerate}
\end{theorem}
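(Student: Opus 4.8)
The plan is to reduce each part of the theorem to a tool that is already available: Part~(1) to the preceding Lemma (Grouping Reduces Multiinformation), and Part~(2) to the elementary observation that $(G_1, \ldots, G_m)$ is merely a relabeling of $Z$. Neither part requires fresh machinery, but Part~(1) requires care in how the hypothesis is formalized.

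For Part~(1), the first step is to make the phrase ``the partition reflects the latent structure'' precise by reading the grouping in the \emph{encoding} direction. Since each $Z_i$ predominantly influences a particular subset of $X$, the partition $S_1, \ldots, S_m$ of the latent indices induces a corresponding partition $T_1, \ldots, T_m$ of the observed indices $\{1, \ldots, n\}$, and the latent group $G_j = (Z_i : i \in S_j)$ is recovered as a deterministic encoder function $f_j$ of the block $(X_i : i \in T_j)$. With this identification the setup is exactly that of the Lemma applied to $X_1, \ldots, X_n$ partitioned into $T_1, \ldots, T_m$: each $G_j$ is a deterministic function of a single block. Invoking the Lemma then gives $I(G_1; \cdots; G_m) \le I(X_1; \cdots; X_n)$, and the strictness clause of the Lemma upgrades this to a strict inequality precisely under the stated assumption that nontrivial inter-group dependencies are present, since those are the dependencies the grouping discards. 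I would emphasize that the decoding (generative) reading does not work here: pushing $X$ through independent per-block channels can only \emph{reduce} block multiinformation, which is the wrong direction, so the encoder identification is essential.

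For Part~(2), I would argue that $(G_1, \ldots, G_m)$ carries exactly the same information as $Z$. Because $S_1, \ldots, S_m$ is a partition of $\{1, \ldots, k\}$, every component $Z_i$ appears in exactly one group, so the map $Z \mapsto (G_1, \ldots, G_m)$ is a bijection (a reordering of coordinates). Consequently $I(Z; G_1, \ldots, G_m) = I(Z; Z)$, the maximal self-information of $Z$ (equal to $H(Z)$ in the discrete case). Since mutual information with $Z$ is maximized by $Z$ itself, $I(Z; X_1, \ldots, X_n) \le I(Z; Z) = I(Z; G_1, \ldots, G_m)$, which is the claim. Equivalently, $X$ is a possibly lossy channel output of $Z$ while $G$ is a lossless invertible transform of $Z$, so no processing can make $X$ more informative about $Z$ than $G$ is.

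The main obstacle lies entirely in Part~(1): the gap between the clean hypothesis the Lemma demands and the softer wording of the theorem. The Lemma needs each $G_j$ to be an \emph{exact} deterministic function of a \emph{disjoint} block of observed variables, whereas ``predominantly influences'' asserts only an approximate, leaky correspondence. In the idealized regime where the latent structure is exact, the reduction above is rigorous; to cover the approximate regime one would have to quantify the cross-block leakage and carry a corresponding error term through the KL-divergence manipulation, which the present statement suppresses. A secondary, minor point is the continuous-variable subtlety in Part~(2), where $I(Z; Z)$ is formally infinite; there the inequality holds trivially because the right-hand side is finite, but this should be flagged rather than silently invoked.
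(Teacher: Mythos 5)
Your proposal is correct, and on both parts it is tighter than the paper's own proof. For Part (1) the paper simply writes ``By Lemma 1'' and asserts $I(G_1;\cdots;G_m) \le I(X_1;\cdots;X_n)$, glossing over exactly the mismatch you identified: the Lemma bounds the multiinformation of variables grouped from $X_1,\ldots,X_n$, whereas the theorem defines each $G_j$ as a block of $Z$, so a literal application of the Lemma would only yield $I(G_1;\cdots;G_m)\le I(Z_1;\ldots;Z_k)$. Your encoder identification---inducing a partition $T_1,\ldots,T_m$ of the observed indices and realizing each $G_j$ as a deterministic function of the block $(X_i : i\in T_j)$---is precisely the bridge the paper's one-line argument needs, and your caveat that this is rigorous only when ``predominantly influences'' is read as exact functional dependence correctly locates the residual gap, which the paper never acknowledges. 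For Part (2) your route is genuinely different: the paper offers no argument beyond the sentence that $G$ ``captures the influence of $Z$ more effectively'' than $X$, whereas you observe that $(G_1,\ldots,G_m)$ is a permutation of the coordinates of $Z$, so $I(Z;G_1,\ldots,G_m)=I(Z;Z)$ and the claim reduces to $I(Z;X)\le I(Z;Z)$, i.e.\ the data processing inequality for the trivial chain $Z\to Z\to X$. One small correction: in the continuous case $I(Z;X)$ need not be finite (e.g.\ if $X$ determines $Z$), so ``holds trivially because the right-hand side is finite'' is not quite the right justification; but your channel/DPI phrasing already covers that case, since $I(Z;X)\le I(Z;Z)$ holds with both sides possibly infinite. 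Your argument also exposes something the paper obscures: Part (2) holds for purely formal reasons---any lossless rearrangement of $Z$ maximizes mutual information with $Z$---and uses nothing about the partition ``reflecting the latent structure,'' so this inequality by itself carries no evidence that the grouping is semantically meaningful.
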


\begin{proof} We can proof each result as follows
\begin{enumerate}
    \item By Lemma 1, grouping reduces multiinformation:
    \[
    I(G_1; \cdots; G_m) \leq I(X_1; \cdots; X_n).
    \]
    Since grouping discards inter-group dependencies, the inequality is strict whenever such dependencies exist.

    \item Because $Z$ explains the joint structure of $X$, the grouped representation $G = (G_1, \ldots, G_m)$ captures the influence of $Z$ more effectively than the full set of variables $X$. Thus:
    \[
    I(Z; G_1, \ldots, G_m) \geq I(Z; X_1, \ldots, X_n).
    \]
\end{enumerate}
\end{proof}
In the context of representation learning, this theorem states there exists a transformation from data inputs to latent variables such that grouping latent variables into disjoint subsets can decrease the redundant multiinformation. This indicates that the latent space is able to store more relevant information.

\section{Method}
\begin{figure}[h]
    \centering
    \includegraphics[width=0.5\linewidth]{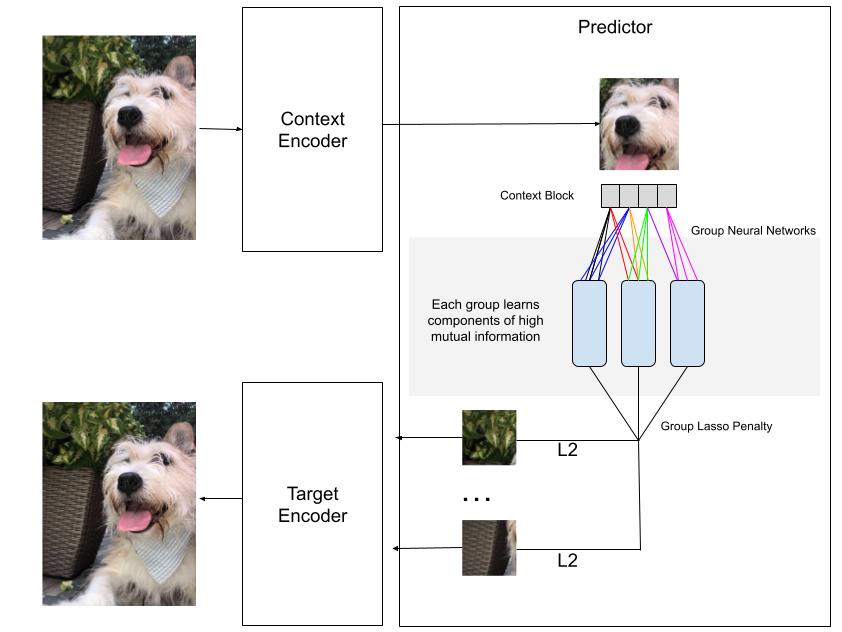}
    \caption{SparseJEPA Architecure}
    \label{fig:enter-label}
\end{figure}
This section describes the proposed SparseJEPA framework, which integrates sparse representation learning into JEPA. SparseJEPA employs a sparsity loss inspired by oi-VAE to improve interpretability downstream task performance.\par
\subsection{Overview of the SparseJEPA Architecture}
\begin{itemize}
    \item \textbf{Backbone Architecture}: A lightweight Vision Transformer (ViT) is used as the feature extractor, mapping input images to initial latent embeddings.
    \item \textbf{JEPA Prediction}: The framework learns a latent space embedding of the input by predicting target blocks given a context block in representation space, as described in the JEPA paper \citep{assran2023selfsupervisedlearningimagesjointembedding}.
    \item \textbf{Sparsity Module}: A sparsity-inducing penalty is applied to the latent space, encouraging embeddings to reflect meaningful groupings of shared semantic relationships.
\end{itemize}
\subsection*{Vision Transformer Description}
We fine-tune the Tiny Vision Transformer (ViT) implementation provided by the JEPA GitHub repository as the backbone for both the context encoder and the target encoder. The Tiny ViT operates by splitting the input image into non-overlapping patches, which are linearly embedded and augmented with positional encodings before being processed by a series of transformer layers. Each transformer layer consists of multi-head self-attention mechanisms and feedforward networks, enabling the model to capture both local and global features across the image. The resulting patch-level embeddings serve as a compact, high-dimensional representation of the image's semantic information. This lightweight version of the ViT is computationally efficient while retaining the ability to extract rich, abstract features, making it ideal for the JEPA framework where multiple encoders are used.

\subsection*{JEPA Description}
The JEPA leverages the Tiny ViT to learn predictive relationships between different regions of an image in representation space. The JEPA model divides the input image into context and target blocks, with the context block representing visible regions of the image and the target blocks representing unseen regions whose representations must be predicted. The embeddings generated by the Tiny ViT are used to encode both the context and target blocks. The predictor module then takes the context block embeddings, along with positional embeddings and mask tokens, to predict the high-level representations of the target blocks. 

To train the model, the loss function optimizes the difference between the predicted and actual target block embeddings. Specifically, the loss is defined as the average L2 distance between the predicted patch-level representations and their corresponding ground truth representations for each target block. For a target block $i$, represented by patches indexed by $j$ in block mask $B_i$, the loss is calculated as:

\[
\mathcal{L} = \frac{1}{M} \sum_{i=1}^{M} \sum_{j \in B_i} \lVert \hat{\mathbf{s}}_{y_j} - \mathbf{s}_{y_j} \rVert^2_2,
\]
where:
\begin{itemize}
    \item \( M \) is the number of target blocks.
    \item \( \hat{\mathbf{s}}_{y_j} \) is the predicted embedding for patch \( j \) in target block \( i \).
    \item \( \mathbf{s}_{y_j} \) is the ground truth embedding for the same patch.
\end{itemize}

This loss encourages the predictor to accurately reconstruct the semantic structure of unseen regions from the context, optimizing the model's ability to capture meaningful relationships within the representation space. By focusing on representation-level predictions rather than pixel-level reconstructions, JEPA achieves efficient and scalable learning of high-level semantic features.

\subsection{Sparsity Module}
\begin{figure}[h]
    \centering
    \includegraphics[width=0.5\linewidth]{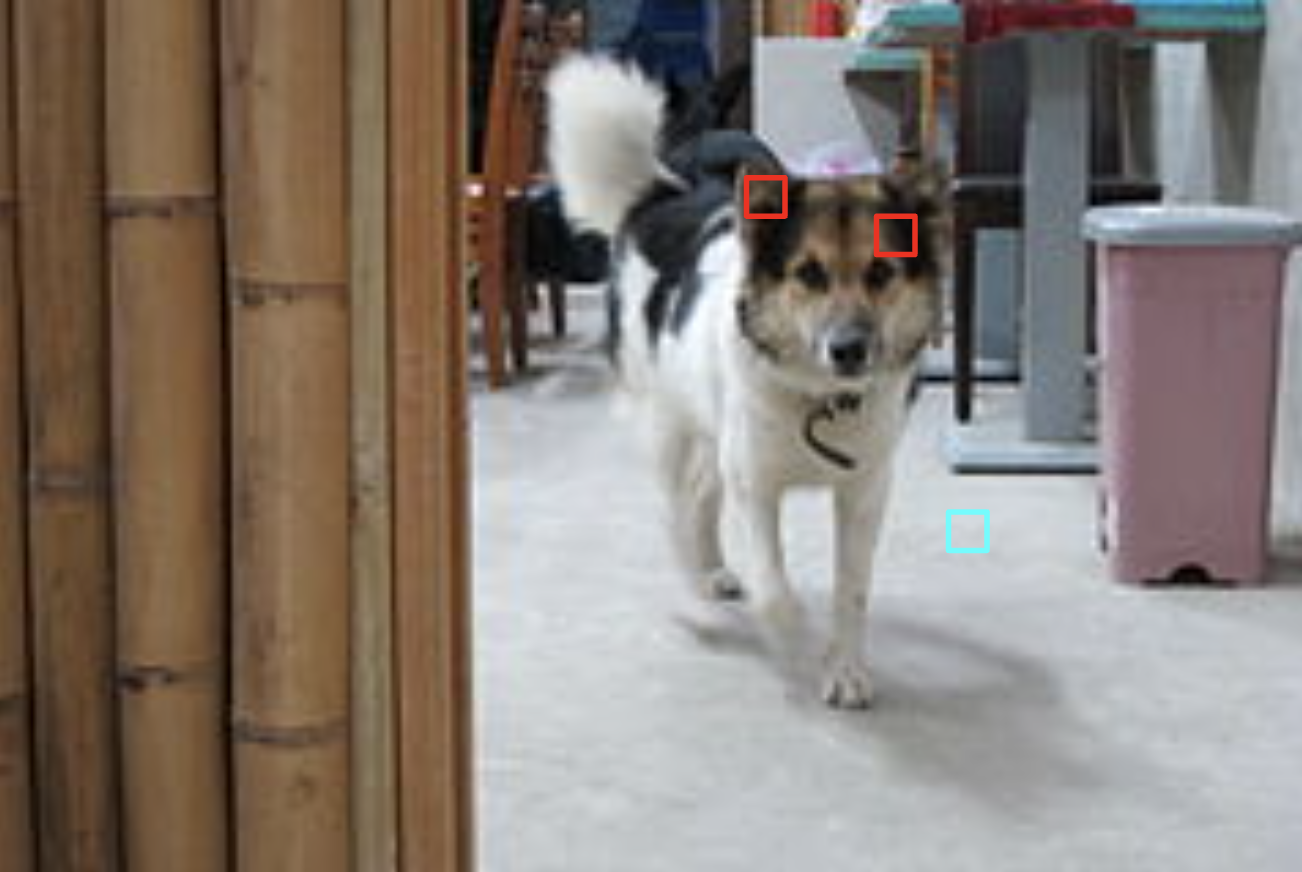}
    \caption{Image of a dog, where the two red patches contain more mutual information with eachother than the blue patch.}
    \label{fig:enter-label}
\end{figure}
We further refine the loss function described above by considering the intrinsic mutual information between pixels in an image. Based on the example in Figure 2, it may be advantageous to learn representations using the JEPA method, while maintaining that latent space dimensions of patches with significant mutual information are grouped together. This outcome is achieved by integrating a sparsity penalty in the prediction loss function, where latent variables are penalized for having heavily weighted in multiple groups. Formally, this new loss function can be described as:

\[
\mathcal{L} = \frac{1}{M} \sum_{i=1}^{M} \sum_{j \in B_i} \lVert \hat{\mathbf{s}}_{y_j} - \mathbf{s}_{y_j} \rVert^2_2 + \beta \, \mathcal{L}_{\text{KL}} + \lambda \sum_{g=1}^G \sum_{j=1}^K \| W^{(g)}_{\cdot, j} \|_2
\]

Where:
\begin{itemize}
    \item \( \lambda \): Regularization coefficient controlling the strength of the sparsity penalty.
    \item \( W^{(g)}_{\cdot, j} \): Latent-to-group matrix, where \( g \) indexes the groups and \( j \) indexes the latent variables.
    \item \( \beta \): Weight assigned to the KL divergence term.
\end{itemize}

This new formulation encourages sparsity by penalizing latent variables for influencing multiple groups, aiding in interpretability and disentanglement of group-specific interactions.\\\\
Based on our results in Section 3, we believe the improved model performance comes from the grouping mechanism causing latent dimensions to be shared among image patches with shared mutual information, capturing inherent meaning in representations. We believe this is caused by minimizing redundant information in the representation space, in which integrating sparsity has been shown to achieve \citep{FANG2018237}.

\label{sec:method}

\section{Results}
SparseJEPA was pre-trained using the CIFAR-100 dataset \citep{krizhevsky2009learning}, rather than ImageNet as proposed in the original JEPA paper due to computational constraints, however the updated architecture still shows a substantial improvement across benchmarks. To evaluate the performance of SparseJEPA, we trained the model on the CIFAR-100, Place205 \citep{NIPS2014_3fe94a00}, iNaturalist 2018 \citep{vanhorn2018inaturalistspeciesclassificationdetection}, and CLEVR \citep{johnson2016clevrdiagnosticdatasetcompositional}, datasets for the downstream task of top-1 image classification via simple linear probing. The reported results are based on our implementation of JEPA and SparseJEPA.

\par
\begin{table}[h!]
    \centering
    \caption{Top-1 classification accuracy for the proposed SparseJEPA model compared to baseline JEPA.}
    \begin{tabular}{|l|c|c|c|c|}
        \hline
        \textbf{Model} & \textbf{CiFAR100} & \textbf{Place205} & \textbf{CLEVR/COUNT} & \textbf{iNat18} \\
        \hline
        JEPA & 40.01 & 21.24 & 59.13 & 16.52 \\
        SparseJEPA & \textbf{45.4} & \textbf{23.36} & \textbf{62.33} & \textbf{19.63} \\
        \hline
    \end{tabular}
    \label{tab:classification_accuracy}
\end{table}
\label{sec:results}
\section{Conclusion}
In this paper, we introduced SparseJEPA, a novel extension to the JEPA framework, incorporating sparse representation learning to enhance interpretability and efficiency in latent space embeddings. SparseJEPA leverages a sparsity-inducing penalty inspired by the oi-VAE framework, which encourages the alignment of latent space dimensions with meaningful groupings of shared semantic relationships. By structuring the latent space in this way, SparseJEPA not only improves model interpretability but also achieves superior performance in downstream tasks, as demonstrated on the CIFAR-100 benchmark. This work highlights the potential of sparse representation learning to address the interpretability challenges inherent in dense embedding models, paving the way for more transparent and efficient self-supervised learning frameworks.
\label{sec:conclusion}
\section{Limitations}
While SparseJEPA demonstrates promising improvements in interpretability and performance, this work has two main limitations. First, we did not scale the model size due to computational constraints, which limited our ability to explore the full potential of SparseJEPA when applied to larger architectures. Scaling up could potentially lead to further performance improvements and more insights into the latent space structure, but it requires significant compute resources.
\label{sec:limitations}

\hfill

\break
\bibliography{bib}

\end{document}